\setlist[itemize]{noitemsep} 
\renewcommand\thesection{\Roman{section}} 
\renewcommand\thesubsection{\roman{subsection}} 
\titleformat{\section}[block]{\large\scshape\centering}{\thesection.}{1em}{} 
\titleformat{\subsection}[block]{\large}{\thesubsection.}{1em}{} 
\def\layersep{2.5cm}
\newtheorem*{thm}{Theorem}
\title{Feedforward Neural Network for Time Series Anomaly Detection} 
\author{%
\textsc{ZHANG Rong} \\
\normalsize Tencent Company \\ 
\normalsize \href{zr9558@gmail.com}{zr9558@gmail.com} 
\\
\textsc{NIE Xin}\\
\normalsize Tencent Company \\ 
\normalsize \href{michaelnie@tencent.com}{michaelnie@tencent.com} 
\and
\textsc{DONG Shandong}\\
\normalsize National University of Singapore \\ 
\normalsize \href{E0009088@u.nus.edu}{E0009088@u.nus.edu} 
\\
\textsc{XIAO Shiguang}\\ 
\normalsize Tencent Company \\ 
\normalsize \href{philipxiao@tencent.com}{philipxiao@tencent.com} 
}
\date{\today} 
\begin{document}

\maketitle

\section{Introduction}
To ensure systems working stably and efficiently, internet companies need to monitor huge time series every minute, whose names are KPIs (Key Performance Indicators). For example, in the industry and academia, KPIs contain several kinds of time series, including CPU, online page views, online users of some application, the number of failures and successes of logging some website. In our opinion, different KPIs have different shapes and trends, so it is difficult to use a simple statistical model to detect all anomaly points of KPIs. Before using machine learning models, we wrote rules and used 3-sigma method to detect anomaly points of time series. However, rules become more and more complicated and it is impossible for us to check all rules at regular time intervals in order to guarantee precision and recall of the whole system. Therefore, we try our best to build a new system which is based on human experience and machine learning theory, in order to increase the precision and recall of the our system for anomaly detection of time series.

Anomaly detection has been an active research area in the fields of machine learning and statistics. Statistical methods, control chart theory \cite{StatisticalQualityControl}, ARIMA and seasonal ARIMA models \cite{ARIMA3},\cite{ARIMA1},\cite{ARIMA2}, Holt-Winters model \cite{HoltWinters} are proposed for time series anomaly detection. Beside statistical models, in machine learning theory, there are also a lot of methods to detect anomaly points of time series, such as supervised and unsupervised models. Most existing anomaly detection approaches, including classification-based methods \cite{Aggarwal}, isolation forest \cite{IsolationForest}, one-class SVM \cite{OneClassSVM}, clustering-based methods, construct normal pattern from samples, then identify anomaly points as those which do not satisfy the normal pattern. In the field of time series anomaly detection, some scholars provided supervised models bases on feature engineering \cite{Opprentice} and unsupervised models \cite{Donut} to detect anomaly points of KPIs.

As discussed before, to ensure high precision and recall of the system, we must provide a stable and efficient algorithm. This paper proposes a different approach that detects anomalies by neural networks, without relying on any feature engineering or time series detectors. The contributions of the paper can be summarized as follows:

\begin{itemize}
\item Deep feedforward neural network is an end-to-end model, which can be trained from normalized raw datas to corresponding labels. The main technique in our system is that there is no need of feature engineering of time series.
\item The offline precision and recall of deep feedforward neural network are higher than XGBoost with a lot of feature engineering from human experience.
\item The output of hidden layers of trained feedforward neural networks can be taken as the features of time series.
\end{itemize}

\section{Background and Problems}
In this section, we focus on the statement of the academic problem on time series anomaly detection and describe its background in the industry.

\subsection{Time Series Anomalies}
In the industry, time series are collected from network logs, computers and applications. In the industry, we can collect time series on response time, successful rate, failure count, the number of online users and so on. 

Anomaly points of time series are some points with unexpected patterns (e.g. suddenly increasing and decreasing, trends changes, level shifts and exceeding the maximal value in the history). These anomaly points in time series means some network services do not work well enough, some users can not login on applications or open some websites. Therefore, time series anomaly detection is a crucial work in our daily work, and we must provide a stable and robust model to detect anomaly points for a great quantity of time series in real time.

\subsection{Problem and Goal}
In machine learning theory, the fundamental performance of supervised models contains two important indices, which are called recall and precision. In anomaly detection of  time series, if we use negative label to denote the anomaly case and positive label to denote the normal case, then the recall and precision of supervised models can be defined as
\begin{eqnarray*}
& &\mbox{Recall} \\
&=& \frac{\mbox{the number of true anomalous points detected}}{\mbox{the number of true anomalous points}}\\
&=& \frac{TN}{TN+FP}, \\
& &\mbox{Precision} \\&=& \frac{\mbox{the number of true anomalous points detected}}{\mbox{the number of anomalous points detected}}\\
&=& \frac{TN}{TN+FN},
\label{RecallAndPrecision}
\end{eqnarray*}
respectively. The notations TP, FN, FP and TN mean true positive, false negative, false positive, true negative, whose details are in table $\ref{ConfusionMatrix}$. Besides precision and recall, F1-score is comprehensive metric to measure the performance of supervised models, which is defined as
\begin{eqnarray*}
\mbox{F1-Score} = \frac{2\cdot\mbox{precision}\cdot\mbox{recall}}{\mbox{precision} + \mbox{recall}}.
\label{F1-Score}
\end{eqnarray*}
In time series anomaly detection, we wish to build a robust supervised model such that recall and precision of it as high as possible. Especially, we wish to use some supervised model such as feedforward neural network to reduce feature engineering of time series, even avoid feature engineering.

\begin{table}
\begin{center}
\begin{tabular}{ | m{8em} | m{2cm}| m{2cm} | }
\hline
 & \multicolumn{2}{c|}{Prediction Results} \\
\cline{2-3}
True Cases& Positive & Negative \\
\hline
Positive & TP & FN \\
\hline
Negative & FP & TN \\
\hline
\end{tabular}
\caption{Confusion Matrix}
\label{ConfusionMatrix}
\end{center}
\end{table}

\subsection{Previous Work}
In order to detect anomalies of time series, there exist a lot of useful methods in statistics. For example, we can use control chart theory, such as basic control chart, moving average control chart, exponentially weighted moving average control chart, which are written in the book clearly \cite{StatisticalQualityControl}. In the theory of time series, ARIMA models is an excellent model for anomaly detection of time series, there are a lot of papers on this method \cite{TimeSeriesAnalysis}\cite{ARIMA1}\cite{ARIMA2}.Weierstrass approximation theorem states, every continuous function defined on a closed interval $[a, b]$ can be uniformly approximated as closely as desired by a polynomial function \cite{Rudin1}\cite{Rudin2}. Therefore, we can use a polynomial to approximate some time series, and get its anomaly points. In Tsinghua and Baidu's work \cite{Opprentice}, they provide a time series anomaly detection system, which is called 'Opprentice'. In the paper, they use $64$ basic detectors to calculate $133$ features for time series and get labels from experiences of people. The tool for time series anomaly labeling in Baidu can be download in github \cite{BaiduCurve}. Recently, Microsoft also opened the other anomaly detection labelling tool in Github \cite{MicrosoftCurve}. In \cite{Opprentice}, they use labels and features to train a supervised models (random forest), finally they got a trained model to predict anomaly points for incoming time series. Besides supervised models, unsupervised models, such as variational auto encoder, can be used to detect anomaly points for seasonal time series, which is in Tsinghua and Alibaba's work \cite{Donut}. Recurrent neural networks and LSTMs can be used for anomaly detection in time series \cite{LSTMForAnomalyDetection}, but one LSTM model can only detect anomaly points for one of time series.

\subsection{Challenges in Time Series Anomaly Detection}

\begin{itemize}
\item \textbf{Big Amount of Time Series.} In the industry, we usually collect time series every minutes, that means there are 1440 points in one time series every day. Beside it, the number of time series exceeds one million, therefore every minutes we must detect anomaly points in this timestamp for millions of time series.

\item \textbf{Class Imbalance Problem.} In machine learning theory, most classification datasets do not have exactly equal number of instances in each class. In reality, the vase majority of the points in time series are in the normal class and a very small minority are in the abnormal class, i.e. the number of normal cases are much larger than the number of anomaly cases. Before we train supervised models, we must use tactics to combat imbalanced training data.

\item \textbf{Incomplete Normal and Anomaly Cases.} In the industry and academia, since there are too many different shapes of time series and the anomaly cases are much less than normal cases, it is difficult for us to label all timestamps of time series. Moreover, how to label normal and abnormal points depends on human's experience. 

\item \textbf{Feature Engineering is Complex.} If we use supervised models to train labelled datasets, then we usually try to construct enough features from original time series. However, feature engineering depends on human experience, and different engineers will construct different features for the same time series. It is possible for engineers to construct redundant, complex and even contradictory features. Too many features will result in the efficiency of anomaly detection in real time, since in one minute we must calculate all features and predict the status for millions of time series from the trained supervised model.

\end{itemize}

\section{System Overview}

\subsection{Core Ideas}
In this section, we pay attention to the whole system design for time series anomaly detection in figure $\ref{MetisSystemOverview}$. In the whole system, the main difference between our system and other systems whose are based on supervised models is feature engineering of time series. For instance, in Opprentice system\cite{Opprentice}, they use $64$ detectors to calculate $133$ features. However, in the system we do not construct any features of time series by human experience. In the main theorem of the paper, we will prove that there exists a deep feedforward neural network to calculate all features of time series in table $\ref{FeatureEngineeringofTimeSeries}$. Therefore, we try to construct an end-to-end model, which means the input is the original datasets, the output is the corresponding labels. Then we try to make use of deep neural network to train raw datasets and their labels and get a trained neural network. Then the outputs of hidden layers can be taken as the features of time series, which is called "Time Series To Vectors".

\begin{figure}
\begin{center}
\resizebox{8cm}{2.5cm}{
\begin{tikzpicture}[
roundnode/.style={circle, draw=green!60, fill=green!5, very thick, minimum size=4mm},
squarednode/.style={rectangle, draw=green!60, fill=green!5, very thick, minimum size=3mm},
]
\scalebox{1}{
\node[squarednode]      (history)                              {Historical Time Series};
\node[squarednode]      (label)    [right=of history]       {Labeling Tool};
\node[squarednode]      (DNNTraining)    [right=of label]       {Training Deep Feedforward Neural Network Models};
\node[squarednode]        (incoming)       [below=of history] {Incoming Time Series};
\node[squarednode] (DNNTrained) [below=of DNNTraining] {Trained Deep Feedforward Neural Network};
\node[squarednode] (prediction) [below=of DNNTrained] {Prediction: Anomaly or Normal?};

\draw[->] (history.east) -- (label.west);
\draw[->] (label.east) -- (DNNTraining.west);
\draw[->] (DNNTraining.south) -- (DNNTrained.north);
\draw[->] (incoming.east) -- (DNNTrained.west);
\draw[->] (DNNTrained.south) -- (prediction.north);
}
\end{tikzpicture}
}
\caption{The Whole System Overview}\label{MetisSystemOverview}
\end{center}
\end{figure}
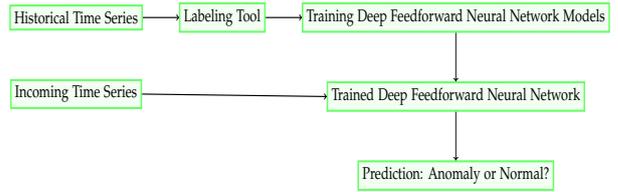

\subsection{Details of Deep Feedforward Neural Networks}
Before explaining the details of the whole system, we must write the main theorem of the paper in the section. The table $\ref{FeatureEngineeringofTimeSeries}$ contains some of features in \cite{Opprentice} and other classical features of time series. 

\begin{table}
\begin{center}
\begin{tabular}{ |  m{12em}| m{10em} | }
\hline
Detectors and Some Features of Time Series& Parameters \\
\hline
simple threshold & none\\
\hline
max, min, average  & none \\
\hline
difference, integration  &  none \\
\hline
 absolute sum of changes, mean change, mean second derivative central & none\\
\hline
count above mean, count below mean & none \\
\hline
historical change & window size = 1, 7 days \\
\hline
Simple Moving Average & window size = 10, 20, 30, 40, 50 minutes \\
\hline
Weighted Moving Average & window size = 10, 20, 30, 40, 50 minutes\\
\hline
Exponentially Weighted Moving Average & $\alpha$ = 0.2, 0.4, 0.6, 0.8 \\
\hline
\end{tabular}
\caption{Feature Engineering of Time Series}\label{FeatureEngineeringofTimeSeries}
\end{center}
\end{table}

\begin{thm}
Suppose $n\geq 1$ is a positive integer, there exists a deep feedforward neural network $D$ such that for any real time series $\mathbf{X}_{n}=[X_{1},\cdots, X_{n}]$, the input and output of $D$ are  $\mathbf{X}_{n}$ and the features of $\mathbf{X}_{n}$ in table $\ref{FeatureEngineeringofTimeSeries}$, respectively.
\end{thm}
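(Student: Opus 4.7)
The plan is to prove the theorem constructively, exhibiting the network $D$ layer by layer and showing that each entry of Table~\ref{FeatureEngineeringofTimeSeries} is computed by a subnetwork whose outputs are simply concatenated. Since the statement only asserts \emph{existence}, the individual sub-networks can be designed independently and glued together in parallel, with the final depth bounded by the deepest single component.

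First I would dispose of all features that are linear functions of $\mathbf{X}_n$. This covers a large portion of the table: the average $\tfrac{1}{n}\sum X_i$, the integration $\sum X_i$, the first differences $X_{i+1}-X_i$, the mean change, the mean second derivative central, the historical change $X_i - X_{i-w}$ for $w\in\{1,7\cdot\text{day}\}$, the simple moving averages, the weighted moving averages, and the EWMA defined recursively by $S_i = \alpha X_i + (1-\alpha)S_{i-1}$ (which unrolls to $S_i=\alpha\sum_{j=0}^{i}(1-\alpha)^j X_{i-j}$, again a linear functional). Every such feature is a single inner product $\mathbf{w}^\top \mathbf{X}_n$, so a single fully-connected layer with appropriately chosen weights and zero bias produces all of them simultaneously.

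Next I would handle the ReLU-expressible features. The absolute value satisfies $|x|=\mathrm{ReLU}(x)+\mathrm{ReLU}(-x)$, so the absolute sum of changes $\sum_i |X_{i+1}-X_i|$ is built in two extra layers on top of the first-difference outputs. For $\max$ and $\min$ I would use the identity $\max(a,b)=\mathrm{ReLU}(a-b)+b$ and $\min(a,b)=b-\mathrm{ReLU}(b-a)$, then iterate in a binary-tree pattern of depth $\lceil\log_2 n\rceil$ so that $\max(X_1,\dots,X_n)$ and $\min(X_1,\dots,X_n)$ appear at the output layer. Nothing here requires more than standard ReLU activations and carefully chosen weight matrices.

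The main obstacle is the set of features that are genuine indicator functions, namely the simple threshold and the counts above/below mean, $\sum_i \mathbf{1}[X_i > \mu]$ and $\sum_i \mathbf{1}[X_i < \mu]$. These cannot be represented exactly by a ReLU network, since ReLU networks realize continuous piecewise-linear functions whereas indicators are discontinuous. I would resolve this by allowing a Heaviside-type activation $H(x)=\mathbf{1}[x>0]$ in one hidden layer (feedforward networks classically permit any measurable activation); then $\mathbf{1}[X_i>\mu]=H(X_i-\mu)$ with $\mu$ itself supplied by the linear-layer average computed earlier, and the counts are obtained by summing. If one insists on ReLU-only networks, the theorem must be read in the approximation sense and the Heaviside is replaced by $\mathrm{ReLU}(K(x))-\mathrm{ReLU}(K(x)-1)$ for large $K$, giving the feature up to arbitrary precision. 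Concatenating the outputs of the linear layer, the ReLU tower, and the thresholding layer yields the required network $D$.
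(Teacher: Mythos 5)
Your construction follows essentially the same route as the paper's appendix proof: both decompose Table~\ref{FeatureEngineeringofTimeSeries} into (i) linear functionals of $\mathbf{X}_{n}$ (average, integration, differences, mean change, mean second derivative, SMA, WMA, and the unrolled EWMA), realized by a single weight matrix; (ii) features built from $|x|=\mathrm{ReLU}(x)+\mathrm{ReLU}(-x)$, including max, min and the absolute sum of changes; and (iii) the indicator-type features (simple threshold, count above/below mean). The differences are worth noting. For max/min the paper nests pairwise maxima sequentially (depth linear in $n$) via $\max(x,y)=(x+y+|x-y|)/2$, while your binary tree achieves depth $\lceil\log_{2}n\rceil$ with the equivalent identity $\max(a,b)=\mathrm{ReLU}(a-b)+b$; both are valid. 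The substantive divergence is in (iii): the paper approximates the indicator by $f_{a}(x)=\sigma(-2\cdot 10^{4}\cdot\mathrm{ReLU}(a-x)+10)$ and writes $f_{a}(x)\approx\mathcal{I}_{\{x\geq a\}}$, but never acknowledges that this makes the theorem hold only approximately, and indeed its own error bound fails in the band $a-10^{-3}<x<a$ where $f_{a}$ is far from both $0$ and $1$. You identify this obstruction explicitly --- a continuous piecewise-linear (ReLU) network cannot represent a discontinuous indicator exactly --- and offer the two honest resolutions: admit a Heaviside activation to get exact equality, or read the statement in an approximation sense with the ramp $\mathrm{ReLU}(Kx)-\mathrm{ReLU}(Kx-1)$. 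In this respect your proposal is more careful than the paper's own argument, which quietly asserts an exact conclusion that its construction only delivers up to approximation error near the threshold.
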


\begin{proof}
The statement of the main theorem is in figure $\ref{MainTheorem}$ and the details of the main theorm is in the appendix.

\begin{figure}
\begin{center}
\resizebox{4cm}{3cm}{
\begin{tikzpicture}[
roundnode/.style={circle, draw=green!60, fill=green!5, very thick, minimum size=7mm},
squarednode/.style={rectangle, draw=green!60, fill=green!5, very thick, minimum size=5mm},
]
\node[squarednode]      (history)                              {Time Series $\mathbf{X}_{n}$};
\node[squarednode]      (label)    [below=of history]       {Feedforward Neural Network $D$};
\node[squarednode]      (DNNTraining)    [below=of label]       {The Features of $\mathbf{X}_{n}$ in Table $\ref{FeatureEngineeringofTimeSeries}$};

\draw[->] (history.south) -- (label.north);
\draw[->] (label.south) -- (DNNTraining.north);

\end{tikzpicture}
}
\caption{The Statement of Main Theorem}\label{MainTheorem}
\end{center}
\end{figure}
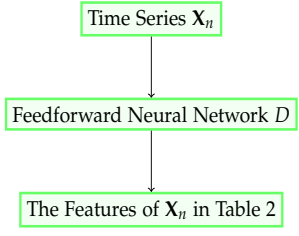

\end{proof}

From the statement of the main theorem, we can construct a feedforward neural network to calculate all features in the table $\ref{FeatureEngineeringofTimeSeries}$. Therefore we can also construct a feedforward neural network and train it from the normalized time series and labels. The original data contains three subsequences, the first part is before a week, the second part is in yesterday, the third part is in today. All of them combines together and get a sequence. The input of the neural network is a min-max-normalized subsequences, the output of the neural network is probabilities on 0 and 1, where 0 and 1 denote the anomaly points and normal points in time series, respectively. Therefore, we construct a feedforward neural network to train labelled datasets and do not need any feature engineering of time series.

In the industry, we usually collect a value per minute for each time series, then for every time series there are exactly $1440$ values per day and $10080$ values per week. Suppose there is a whole time series $[X_{1},\cdots, X_{n}]$ whose timestamp is more than two weeks (20160 minutes), i.e. $n\geq 20160$. In our platform,  we need to check three subsequences of the whole time series in order to label normal and abnormal points. More precisely, we want to check whether $x_{t}$ is anomaly or not for some timestamp $t$, we need all values between timestamps $t-k$ and $t+k$ of yesterday and last week. Using mathematical notations to write, we can get three subsequence from the whole time series $[X_{1},\cdots,X_{n}]$,
\begin{eqnarray*}
[z_{t-k}, \cdots, z_{t+k}], [y_{t-k},\cdots, y_{t+k}], [x_{t-k},\cdots,x_{t}],
\end{eqnarray*}
where
\begin{eqnarray*}
x_{i} &=& X_{i},  \mbox{ where }t-k\leq i\leq t,\\
y_{i} &=& X_{i-1440},  \mbox{ where }t-k\leq i\leq t+k,\\
z_{i} &=& X_{i-10080}, \mbox{ where }t-k\leq i\leq t+k.
\end{eqnarray*}

\begin{figure}
\begin{center}
\resizebox{7.5cm}{3.8cm}{
\begin{tikzpicture}
\scalebox{1}{
\draw[gray,thick,->] (0,0) -- (0,6);
\draw[gray,thick,->] (0,0) -- (10,0);
\draw[gray,thick,dashed] (5,0) -- (5,6);
\draw[gray,thick] (0,1) -- (10,1);
\draw[gray,thick] (0,3) -- (10,3);
\draw[gray,thick] (0,5) -- (10,5);
\draw[orange,ultra thick,<-] (2,1) -- node[above,sloped]{$k$ minutes ahead} (5,1);
\draw[orange,ultra thick,<-] (2,3) -- node[above,sloped]{$k$ minutes ahead} (5,3);
\draw[orange,ultra thick,<-] (2,5) -- node[above,sloped]{$k$ minutes ahead} (5,5);
\draw[orange,ultra thick,->] (5,3) -- node[above,sloped]{$k$ minutes later} (8,3);
\draw[orange,ultra thick,->] (5,5) -- node[above,sloped]{$k$ minutes later} (8,5);
\filldraw[black] (-1.7,1) node[anchor=west] {today};
\filldraw[black] (-1.7,3) node[anchor=west] {yesterday};
\filldraw[black] (-2.6,5) node[anchor=west] {one week ahead};
\filldraw[color=red!60, fill=red!5, very thick](5,1) circle (0.1);
\filldraw[black] (5.2,0.7) node[anchor=west] {Pending Point};
\filldraw[black] (1.5,0.7) node[anchor=west] {$x_{t-k}$};
\filldraw[black] (4.5,0.7) node[anchor=west] {$x_{t}$};
\filldraw[black] (1.5,2.7) node[anchor=west] {$y_{t-k}$};
\filldraw[black] (4.5,2.7) node[anchor=west] {$y_{t}$};
\filldraw[black] (7.5,2.7) node[anchor=west] {$y_{t+k}$};
\filldraw[black] (1.5,4.7) node[anchor=west] {$z_{t-k}$};
\filldraw[black] (4.5,4.7) node[anchor=west] {$z_{t}$};
\filldraw[black] (7.5,4.7) node[anchor=west] {$z_{t+k}$};
}
\end{tikzpicture}
}
\end{center}
\caption{Three Subsequences from Whole Time Series}
\label{ThreeSubsequencesFromWholeTimeSeries}
\end{figure}
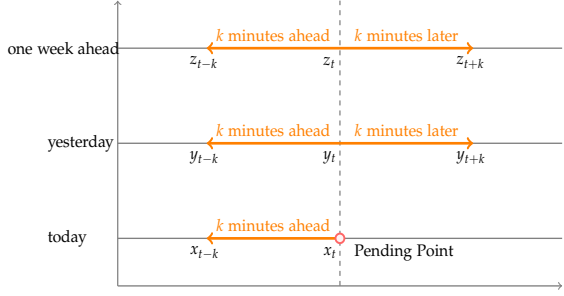

The details of three subsequences $x_{i}, y_{i},z_{i}$ is in figure $\ref{ThreeSubsequencesFromWholeTimeSeries}$, and the joint sequence of these three subsequences is 
\begin{eqnarray*}
[z_{t-k},\cdots,z_{t+k}, y_{t-k},\cdots,y_{t+k},x_{t-k},\cdots,x_{t}],
\end{eqnarray*}
and its length is $5k+3$. The min-max normalization of the joint sequence is
\begin{eqnarray*}
x_{i}' &=& \frac{x_{i}- a}{b-a},\mbox{ where }t-k\leq i\leq t,\\
y_{i}' &=& \frac{y_{i}-a}{b-a},\mbox{ where }t-k\leq i\leq t+k,\\
z_{i}' &=& \frac{z_{i}-a}{b-a},\mbox{ where }t-k\leq i\leq t+k.
\end{eqnarray*}
where
\begin{eqnarray*}
a &=& \min\{z_{t-k},\cdots,z_{t+k},y_{t-k},\cdots,y_{t+k},x_{t-k},\cdots,x_{t}\},\\
b &=& \max\{z_{t-k},\cdots,z_{t+k},y_{t-k},\cdots,y_{t+k},x_{t-k},\cdots,x_{t}\}.
\end{eqnarray*}
$[z_{t-k}',\cdots,z_{t+k}', y_{t-k}',\cdots,y_{t+k}',x_{t-k}',\cdots,x_{t}']$ is the joint sequence and the input of feedforward neural network. In the training and testing datasets, the value of $k$ is taken as $180$, which means $3$ hours ($180$ minutes) before and after the timestamp $t$. The length of joint sequence $[z_{t-k}',\cdots,z_{t+k}', y_{t-k}',\cdots,y_{t+k}',x_{t-k}',\cdots,x_{t}']$ is $5k+3 = 903$. For example, we want to know whether the value of some time series at the timestamp 10:00 am in $20180810$ is normal or not, then the three joint subsequences is drawn in figure \ref{ExamplesOfThreeSubsequencesfromWholeTimeSeries}.

\begin{figure}
\begin{center}
\resizebox{7.5cm}{3.8cm}{
\begin{tikzpicture}
\draw[gray,thick,->] (0,0) -- (0,6);
\draw[gray,thick,->] (0,0) -- (10,0);
\draw[gray,thick,dashed] (2,0) -- (2,6);
\draw[gray,thick,dashed] (5,0) -- (5,6);
\draw[gray,thick,dashed] (8,0) -- (8,6);
\draw[gray,thick] (0,1) -- (10,1);
\draw[gray,thick] (0,3) -- (10,3);
\draw[gray,thick] (0,5) -- (10,5);
\draw[orange,ultra thick,<-] (2,1) -- node[above,sloped]{$180$ minutes ahead} (5,1);
\draw[orange,ultra thick,<-] (2,3) -- node[above,sloped]{$180$ minutes ahead} (5,3);
\draw[orange,ultra thick,<-] (2,5) -- node[above,sloped]{$180$ minutes ahead} (5,5);
\draw[orange,ultra thick,->] (5,3) -- node[above,sloped]{$180$ minutes later} (8,3);
\draw[orange,ultra thick,->] (5,5) -- node[above,sloped]{$180$ minutes later} (8,5);
\filldraw[black] (-1.7,1) node[anchor=west] {$20180810$};
\filldraw[black] (-1.7,3) node[anchor=west] {$20180809$};
\filldraw[black] (-1.7,5) node[anchor=west] {$20180803$};
\filldraw[color=red!60, fill=red!5, very thick](5,1) circle (0.1);
\filldraw[black] (5.2,0.7) node[anchor=west] {Pending Point};
\filldraw[black] (1.1,0.7) node[anchor=west] {$x_{t-k}$};
\filldraw[black] (1.4,-0.3) node[anchor=west] {$07:00$};
\filldraw[black] (4.5,0.7) node[anchor=west] {$x_{t}$};
\filldraw[black] (4.4,-0.3) node[anchor=west] {$10:00$};
\filldraw[black] (1.1,2.7) node[anchor=west] {$y_{t-k}$};
\filldraw[black] (4.5,2.7) node[anchor=west] {$y_{t}$};
\filldraw[black] (8.0,2.7) node[anchor=west] {$y_{t+k}$};
\filldraw[black] (7.4,-0.3) node[anchor=west] {$13:00$};
\filldraw[black] (1.1,4.7) node[anchor=west] {$z_{t-k}$};
\filldraw[black] (4.5,4.7) node[anchor=west] {$z_{t}$};
\filldraw[black] (8.0,4.7) node[anchor=west] {$z_{t+k}$};
\end{tikzpicture}
}
\end{center}
\caption{Examples of Three Subsequences from Whole Time Series}
\label{ExamplesOfThreeSubsequencesfromWholeTimeSeries}
\end{figure}
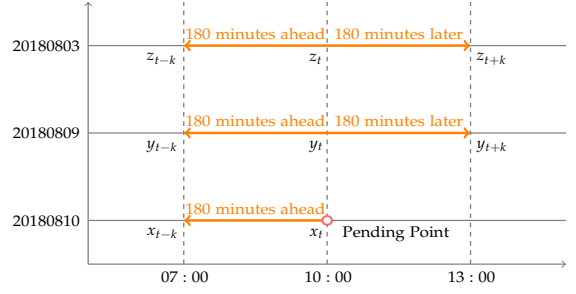

The networks between the input layer and the hidden layers, the hidden layers and the output layer are fully connected. The activation functions of these layers can be chosen as sigmoid function, $\tanh$ function, ReLU, leaky ReLU and so on. In mathematics, $\mbox{ReLU}(x) = \max\{0,x\}$ and $\mbox{leaky ReLU}(x) = \max\{x,\alpha x\}$. In fact, the parameter of  leaky ReLU is $\alpha = 0.2$ in tensorflow. In our work, leaky ReLU with default parameter is used as activation function in feedforward neural network.

The output of the neural network is the probability of two classes, where $0$ and $1$ denote anomaly and normal, respectively. The activation function of the output layer is softmax function. More precisely, if we assume the input of the last layer is $\alpha_{0},\alpha_{1}$, the output of the last layer is
\begin{eqnarray*}
\beta_{i} = \frac{e^{\alpha_{i}}}{e^{\alpha_{0}}+e^{\alpha_{1}}}, \mbox{ where } 0\leq i\leq 1,
\end{eqnarray*}
where $\beta_{0}$ and $\beta_{1}$ denote the probability of label $0$ and $1$ for some time series sample, respectively. The loss function of feedforward neural network is cross entropy.

In reality, the number of anomaly cases are much less than normal cases. For the most of the day, there are no anomaly points for many time series, that means the labelled data is imbalance. To overcome difficulties, there are two basic methods in machine learning theory, under sampling and over sampling. In this situation, we use under sampling to solve class imbalance problem. That means we choose all anomaly cases in labelled data, and randomly choose some normal cases in labelled data such that the ratio of anomaly and normal samples is about $2:1$. The format of labelled data is described in table $\ref{LabelsOfMin-MaxNormalizedTimeSeries}$, which contains labels and subsequences of time series. The details of training datasets and test datasets are in table $\ref{DatasetsOfTimeSeriesAnomalyDetection}$.

The number of parameters of neural network is about $100$ thousands. If we have labelled samples less than $10$ thousands, then we can not train an excellent neural network model, but we can train supervised models from random forest algorithm or GBDT algorithm through feature engineering of time series. Therefore, in the situation, if we want to use deep feedforward neural network to ignore the feature engineering, then we must provide enough labelled data, which is at least $50$ thousands.

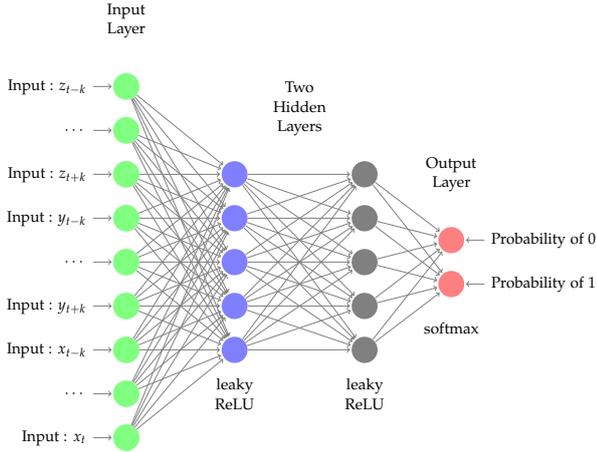
\begin{figure}
\begin{center}
\resizebox{8cm}{6cm}{
\begin{tikzpicture}[shorten >=1pt,->,draw=black!50, node distance=\layersep]
    \tikzstyle{every pin edge}=[<-,shorten <=1pt]
    \tikzstyle{neuron}=[circle,fill=black!25,minimum size=17pt,inner sep=0pt]
    \tikzstyle{input neuron}=[neuron, fill=green!50];
    \tikzstyle{output neuron}=[neuron, fill=red!50];
    \tikzstyle{hidden neuron1}=[neuron, fill=blue!50];
    \tikzstyle{hidden neuron2}=[neuron, fill=black!50];
    \tikzstyle{annot} = [text width=4em, text centered]

    \foreach \name / \y in {1,...,9}
        \node[input neuron] (I-\name) at (0,-\y) {};

    \node[input neuron, pin=left:Input : $z_{t-k}$] (I-1) at (0,-1) {};
    \node[input neuron, pin=left: $\cdots$] (I-2) at (0,-2) {};
    \node[input neuron, pin=left:Input : $z_{t+k}$] (I-3) at (0,-3) {};
    \node[input neuron, pin=left:Input : $y_{t-k}$] (I-4) at (0,-4) {};
    \node[input neuron, pin=left:$\cdots$] (I-5) at (0,-5) {};
    \node[input neuron, pin=left:Input : $y_{t+k}$] (I-6) at (0,-6) {};
    \node[input neuron, pin=left:Input : $x_{t-k}$] (I-7) at (0,-7) {};
    \node[input neuron, pin=left:$\cdots$] (I-8) at (0,-8) {};
    \node[input neuron, pin=left:Input : $x_{t}$] (I-9) at (0,-9) {};

    \foreach \name / \y in {1,...,5}
        \path[xshift=0.0cm,yshift = -2.0cm]
            node[hidden neuron1] (H1-\name) at (\layersep,-\y cm) {};

    \foreach \name / \y in {1,...,5}
        \path[xshift=3.0cm,yshift = -2.0cm]
            node[hidden neuron2] (H2-\name) at (\layersep,-\y cm) {};

    \foreach \name / \y in {0,...,1}
        \path[xshift=5.0cm,yshift = -4.5cm]
            node[output neuron,pin=right:Probability of \y] (O-\name) at (\layersep,-\y cm) {};

    \foreach \source in {1,...,9}
        \foreach \dest in {1,...,5}
            \path (I-\source) edge (H1-\dest);

    \foreach \source in {1,...,5}
        \foreach \dest in {1,...,5}
            \path (H1-\source) edge (H2-\dest);

    \foreach \source in {1,...,5}
        \foreach \dest in {0,...,1}
            \path (H2-\source) edge (O-\dest);

    \node[annot, above of=H1-1, right of =H1-1, node distance=1.5cm] (hl) {Two Hidden Layers};
    \node[annot, above of=I-2] {Input Layer};
    \node[annot, above of=O-1] {Output Layer};
    \node[annot, below of=H1-5, node distance=1cm] {leaky ReLU};
    \node[annot, below of=H2-5, node distance=1cm] {leaky ReLU};
    \node[annot, below of=O-1, node distance=1cm] {softmax};
\end{tikzpicture}
}
\caption{The Architecture of Deep Feedforward Neural Network}\label{TheArchitectureofDeepFeedforwardNeuralNetwork}
\end{center}
\end{figure}

\begin{table}
\begin{center}
\begin{tabular}{ |m{4em}|  m{4em}| m{13.5em} | }
\hline
No. of Samples & Labels & Min-Max Normalized Time Series \\
\hline
1 & Anomaly & $\cdots$ \\
\hline
2 & Anomaly & $\cdots$  \\
\hline
3 & Normal & $\cdots$  \\
\hline
4 & Normal & $\cdots$  \\
\hline
$\cdots$ & $\cdots$ & $\cdots$ \\
\hline
\end{tabular}
\caption{Labels of Min-Max Normalized Time Series}
\label{LabelsOfMin-MaxNormalizedTimeSeries}
\end{center}
\end{table}

\section{Evaluation}

\subsection{Datasets}
In order to implement the system, we use python and its open source library, such as scikit learn, XGBoost, tensorflow and tsfresh, which is an open source of time series feature extraction tool. In order to compare the method between XGBoost and deep feedforward neural network, we need to prepare four files, which contain negative and positive training datasets, negative and positive testing datasets, and the intersection of any two files are empty. The number of these files are in table $\ref{DatasetsOfTimeSeriesAnomalyDetection}$. Every sample in training and test datasets contains 903 points and its label, which is labelled by human.

\begin{table}
\begin{center}
\begin{tabular}{ | m{10em} | m{4em}| m{4em} | }
\hline
Datasets& Negative & Positive \\
\hline
Training Datasets& 48986 & 29134 \\
\hline
Test Datasets& 4509 & 11226 \\
\hline
\end{tabular}
\caption{Datasets of Time Series Anomaly Detection}
\label{DatasetsOfTimeSeriesAnomalyDetection}
\end{center}
\end{table}

\subsection{Performance Metrics and Experiments}
The details of deep feedforward neural network are described in figure $\ref{TheArchitectureofDeepFeedforwardNeuralNetwork}$ and figure $\ref{TheConstructionOfFeedforwardNeuralNetwork}$. In the neural network, the number of hidden layers is two, both the activation functions of two hidden layers are leaky ReLU with default parameters, and the activation function of the output layer is the softmax function. The structure of feedforward neural network is drawn in figure $\ref{TheConstructionOfFeedforwardNeuralNetwork}$, which is drawn by TensorBoard.
\begin{figure*}[ht]
\centering
\includegraphics[width=\textwidth]{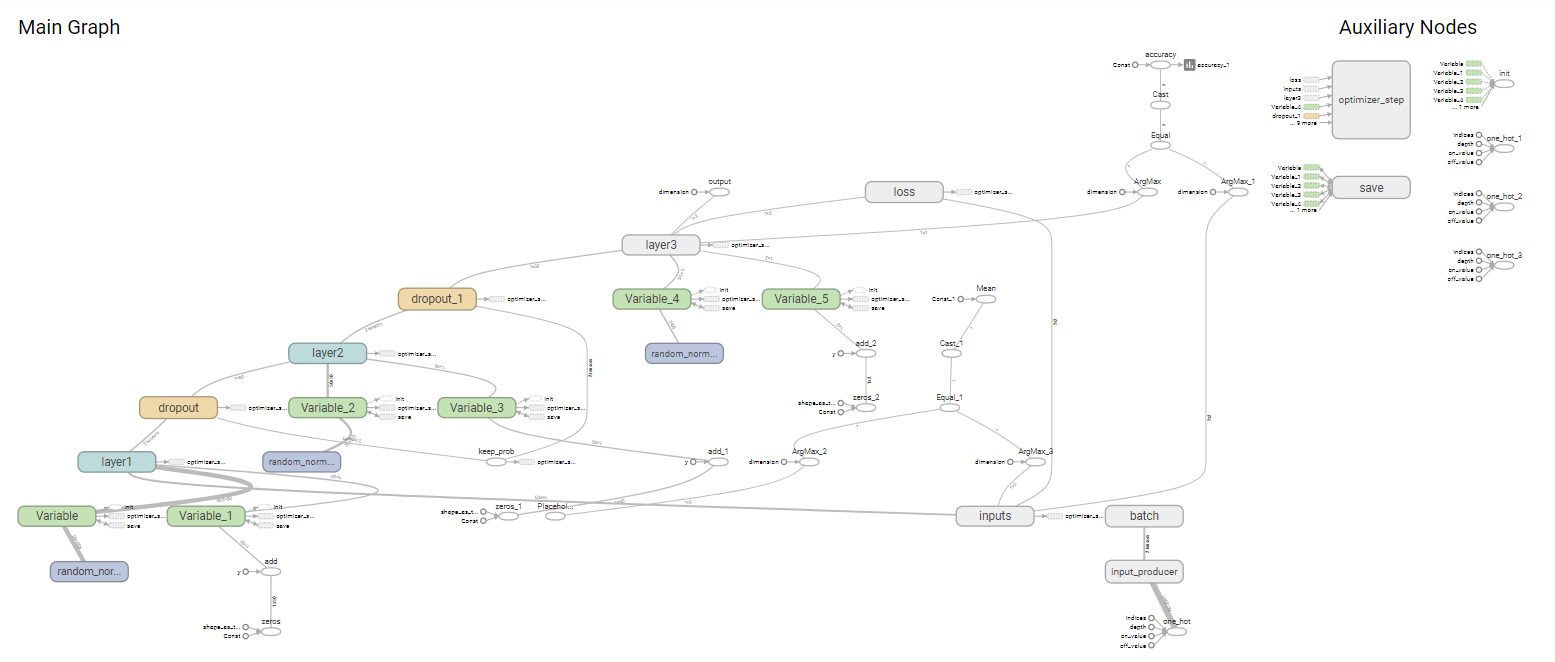}
\caption{The Construction of Feedforward Neural Network}
\label{TheConstructionOfFeedforwardNeuralNetwork}
\end{figure*}

In this paper, we use these indices in section 2.2 to evaluate the unsupervised and supervised models, such as control chart theory, isolation forest, polynomial regression, XGBoost and deep feedforward neural networks. The parameters of these models are in table $\ref{ParametersOfModels}$ and table $\ref{Experiments}$ is the experiment results. In DNN model, the graphs between loss, accuracy and iterations are in figure $\ref{LossAndAccuracy}$.

\begin{figure}[ht]
\centering
\includegraphics[width=0.4\textwidth]{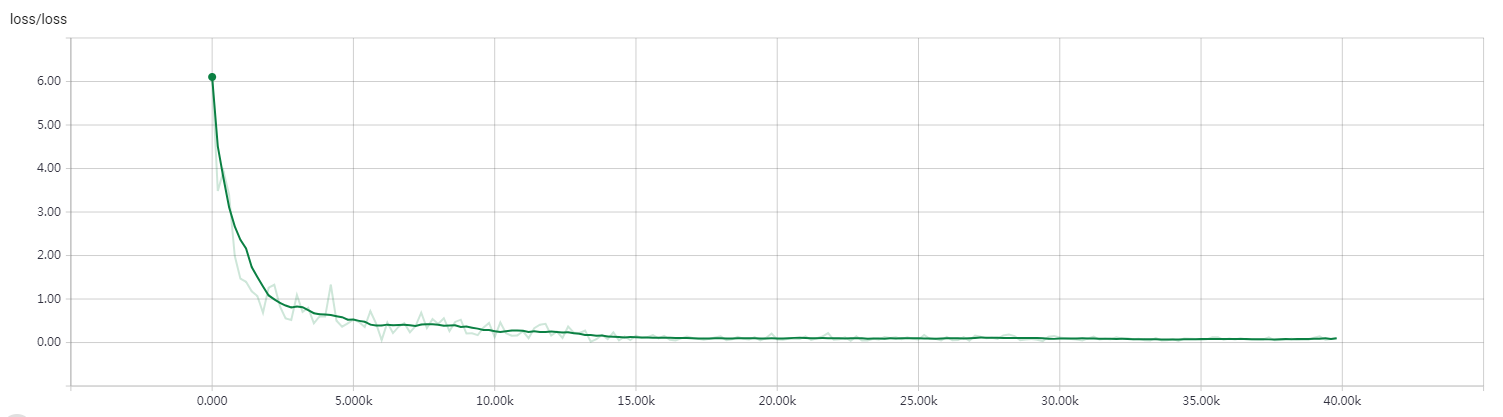}
\includegraphics[width=0.4\textwidth]{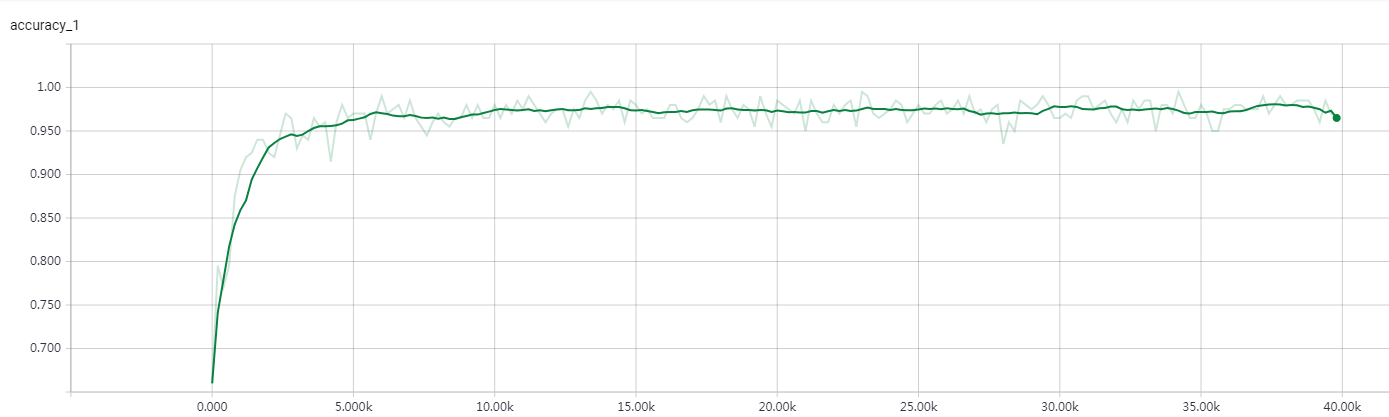}
\caption{The Loss and Accuracy of DNN}
\label{LossAndAccuracy}
\end{figure}

\begin{table}
\begin{center}
\newcommand{\tabincell}[2]{\begin{tabular}{@{}#1@{}}#2\end{tabular}}
 \begin{tabular}{| m{5em}  | m{17em} |}
\hline
Algorithms & Parameters\\
\hline
3-Sigma & None\\
\hline
EWMA Control Chart& coefficient = 3, alpha = 0.3\\
\hline
Polynomial Regression & degree = 4, threshold = 0.3\\
\hline
Isolation Forest & nestimator = 3, maxsample = 'auto',  contamination = 0.15\\
\hline
XGBoost & 243 features,  maxdepth = 10, eta = 0.05, gamma = 0.1, booster = 'gbtree', objective = 'binary:logistic', evalmetric = 'auc'\\
\hline
DNN-1 & number of hidden layers = 2,  number of elements in hidden layer = 50, activation function = Leaky ReLU,  no dropout\\
\hline
DNN-2 & number of hidden layers = 2,  number of elements in hidden layer = 50, activation function = Leaky ReLU,  dropout = 0.95\\
\hline
\end{tabular}
\caption{Parameters of Models}\label{ParametersOfModels}
\end{center}
\end{table}

\begin{table*}
\begin{center}
\begin{tabular}{ | m{9.5em} | m{0.8cm}| m{0.8cm} |m{0.8cm}| m{0.8cm} | m{1.0cm}| m{1.2cm} | m{1.3cm} |}
\hline
Algorithms & TP & FN & FP & TN & Recall & Precision & F1-Score\\
\hline
3-Sigma & 10524 & 702 & 1331 & 3178 & 70.5\% & 81.9\% &75.8\%\\
\hline
EWMA Control Chart & 8981 & 2245  & 294  & 4215  &  93.5\% &  65.2\% &   76.8\%\\
\hline
Polynomial Regression & 9193 & 2033 & 579 & 3930 & 87.2\% & 65.9\% & 75.7\%\\
\hline
Isolation Forest & 9071 & 2155 & 364 & 4145 & 91.2\% & 65.8\% & 76.4\%\\
\hline
XGBoost & 11082 & 144 & 1039 & 3470 & 77.0\% & 96.0\% & 85.5\%\\
\hline
DNN-1 & 11091 & 135 & 891 & 3618 & 80.2\% & 96.4\% & 87.6\%\\
\hline
DNN-2 &  11100  &  126 & 852  & 3657  &  81.1\% &  96.7\% &  88.2\%\\
\hline
\end{tabular}
\caption{Experiments of Algorithms}\label{Experiments}
\end{center}
\end{table*}

\subsection{Output of Hidden Layers as Features of Time Series}
\subsubsection{Clustering of a Set of Time Series}
In statistics, there are several clustering algorithms of clustering of time series, such as KMeans and hierarchical clustering. For example, pearson coefficient and dynamic time warping can be used to calculated the similarity between two time series, then we can get clusters of several time series based on similarity measures and feature engineerings of time series. In the paper, we use a trained neural network to calculate features of time series. More precisely, in our deep feedforward neural network, there are two hidden layers, the output of these two layers can be seen as the hidden features of time series, which is called "Time Series To Vector". For a trained neural network, we can get two hidden features from one time series, then we use clustering algorithms such as KMeans to get several clusterings. The figure \ref{ClustersOfTimeSeries} is the clustering result of time series with these two hidden features.

\begin{figure}[ht]
\centering
\includegraphics[width=0.4\textwidth]{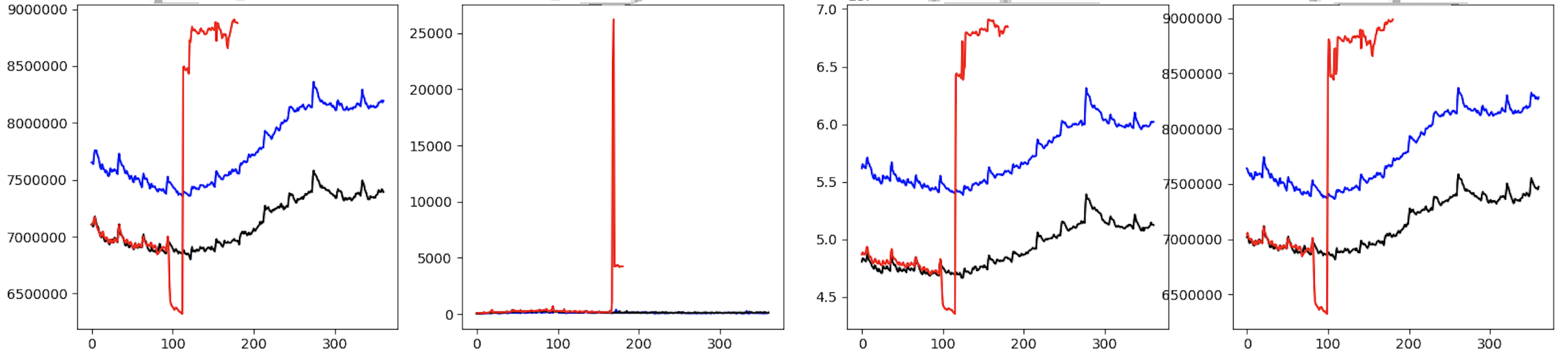}
\includegraphics[width=0.4\textwidth]{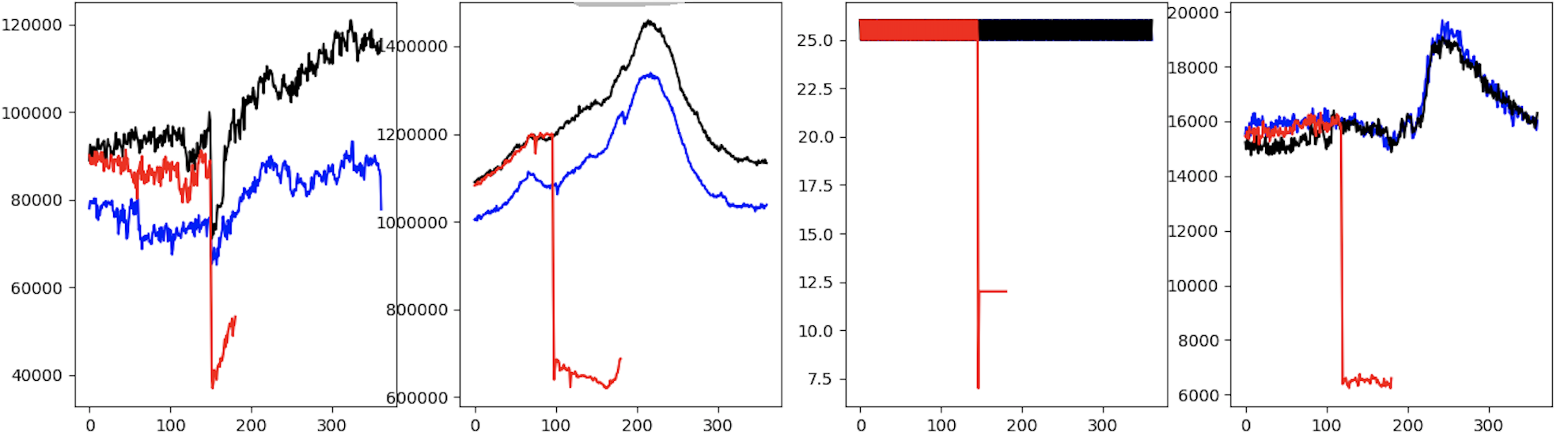}
\caption{Clusters of Time Series}
\label{ClustersOfTimeSeries}
\end{figure}

\subsubsection{Cosine Similarity between Two Time Series}
In mathematics, there are many formulas to calculate similarity between two different time series, such as pearson coefficient, SAX, piecewise average aggregation. In this paper, we can also use the output of two hidden layers as the features of time series, then the cosine similarity between the output of hidden layers of two time series is a similarity measure between them. The performance of this similarity method can be seen in figure \ref{ThreeSimilarTimeSeriesOfTheFirstTimeSeries}.

\begin{figure}[ht]
\centering
\includegraphics[width=0.4\textwidth]{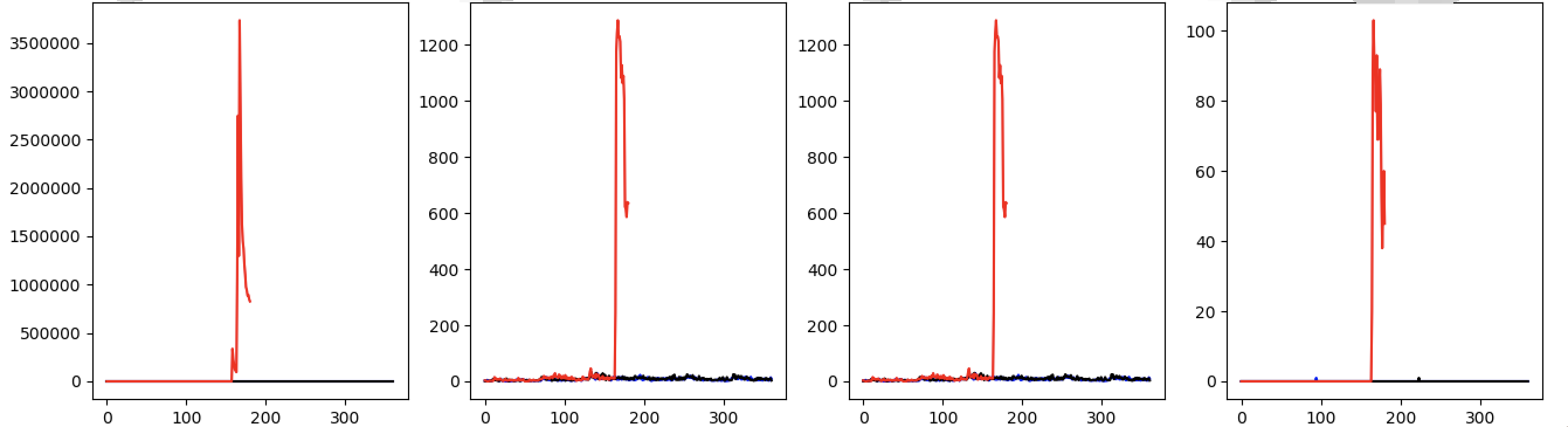}
\includegraphics[width=0.4\textwidth]{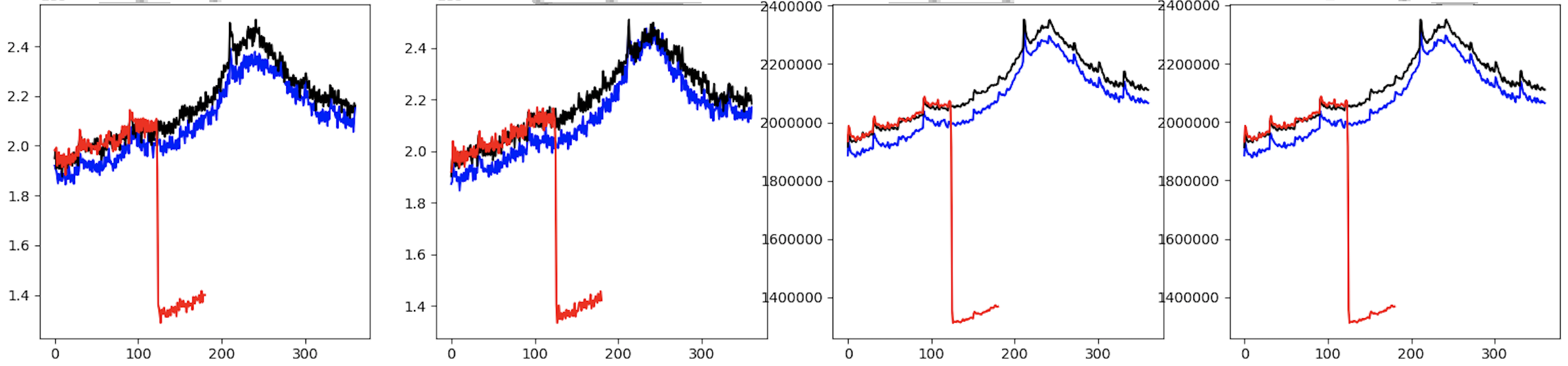}
\includegraphics[width=0.4\textwidth]{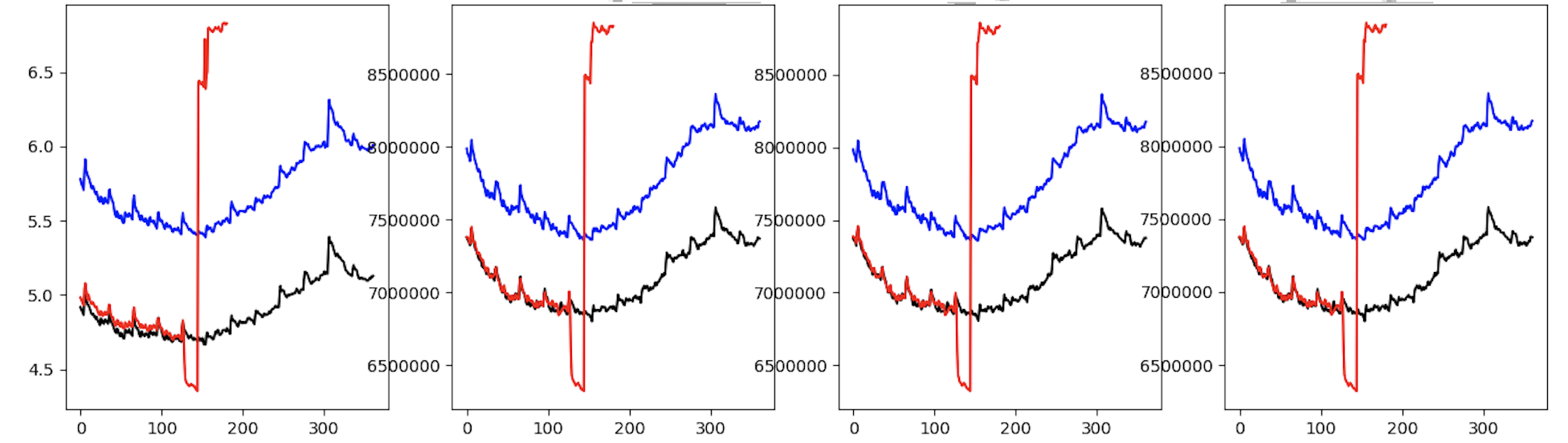}
\caption{Three Similar Time Series of the First Time Series Every Line}
\label{ThreeSimilarTimeSeriesOfTheFirstTimeSeries}
\end{figure}

\section{Conclusion}
In conclusion, from the main theorem and experiments, deep feedforward neural network is a useful supervised model in time series anomaly detection, and it can get hidden features of time series from a trained neural network. Training a deep neural network is an end-to-end method which need only raw datasets and their corresponding labels. The performance metrics, such as precision, recall and F1-score, of deep feedforward neural network is better than XGBoost with feature engineering. The most important thing of the paper is that we do not need feature engineering in the process of training a feedforward neural network. After we get a trained neural network, the output of hidden layers of some time series can be seen as the hidden features of them, which can be called as "TimeSeries2Vec". Then we can use cosine similarity to calculate the similarity between two different time series or do clustering analysis from these hidden features. To the best of our knowledge, our system is the first anomaly detection framework based on deep neural network, and it do not need any feature engineering of time series, which is the biggest difference between ours and other supervised models.

\appendix
\section{Appendix}
In the section, we pay attention to prove the main theorem of the paper. The main theorem is as follows:

\begin{thm}
Suppose $n\geq 1$ is a positive integer, there exists a deep feedforward neural network $D$ such that for any real time series $\mathbf{X}_{n}=[X_{1},\cdots, X_{n}]$, the input and output of $D$ are  $\mathbf{X}_{n}$ and the features of $\mathbf{X}_{n}$ in Table $\ref{FeatureEngineeringofTimeSeries}$, respectively.
\end{thm}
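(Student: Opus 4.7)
The overall approach is parallel composition: since feedforward networks are closed under placing subnetworks side by side and concatenating their outputs, it suffices to exhibit, for each row of Table~\ref{FeatureEngineeringofTimeSeries}, a small feedforward subnetwork on input $\mathbf{X}_n$ that computes that feature, and then to assemble the full network $D$ by stacking the subnetworks in parallel and equalizing their depths with linear identity layers. The plan is to classify the table entries by the kind of dependence they have on $\mathbf{X}_n$ and to give an explicit construction for each class.

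The features naturally split into three groups. The \emph{linear} ones --- average, integration (cumulative sum), first and second differences, mean change, mean second derivative central, historical change with fixed window, simple moving average, weighted moving average, and exponentially weighted moving average --- are each affine maps from $\mathbb{R}^n$ into $\mathbb{R}$ (or $\mathbb{R}^n$) and are therefore realized by a single dense layer with the appropriate weight vector and no activation. The \emph{piecewise-linear} ones --- max, min, and absolute sum of changes $\sum_i |X_{i+1}-X_i|$ --- are built from the ReLU identities
\begin{eqnarray*}
|x| &=& \text{ReLU}(x)+\text{ReLU}(-x),\\
\max(a,b) &=& b+\text{ReLU}(a-b),\\
\min(a,b) &=& -\max(-a,-b),
\end{eqnarray*}
with the $n$-ary max/min reduced by a binary tournament of depth $O(\log n)$, and the absolute sum of changes obtained by applying $|\cdot|$ to each consecutive difference (itself linear) and summing in one final linear layer. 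The \emph{indicator} features --- simple threshold, count above mean, count below mean --- are each of the form $\sum_i \mathbf{1}[X_i > \theta]$ where $\theta$ is either a fixed constant or the mean $\bar X=\tfrac{1}{n}\sum_j X_j$; since $X_i-\bar X$ is linear in $\mathbf{X}_n$, a single threshold neuron acting on this quantity produces the indicator, and summing the indicators is again one linear layer.

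The main obstacle is this last block: no purely continuous ReLU or leaky ReLU activation yields an exact step. I would resolve this by admitting a Heaviside unit as one more allowed neuron type --- the theorem statement places no restriction on the activation and the body of the paper explicitly permits ``sigmoid, $\tanh$, ReLU, leaky ReLU and so on'' --- or, equivalently, by using a sigmoid of sufficiently large gain, which on any fixed finite dataset with positive margin realizes the indicator exactly. Once this is settled, the assembly step is routine: pad every subnetwork to a common depth using the linear pass-through $x=\text{ReLU}(x)-\text{ReLU}(-x)$, place them in parallel, and concatenate their outputs in a final linear read-out layer to obtain the required network $D$.
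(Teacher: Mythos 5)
Your proposal is correct and follows essentially the same route as the paper's appendix: linear features (average, integration, differences, SMA/WMA/EWMA residuals) as single dense layers, absolute value and max/min via the ReLU identities, threshold and count features via a saturated high-gain sigmoid applied to a linear margin, and assembly of the pieces into one network $D$. The only differences are cosmetic --- you reduce the $n$-ary max by a logarithmic-depth tournament where the paper nests it sequentially to depth $n-1$, and you are more explicit than the paper that the indicator-type features are realized only approximately (exactly only on inputs with positive margin), a caveat the paper's statement $f_{a}(x)\approx \mathcal{I}_{\{x\geq a\}}$ passes over silently.
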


\begin{proof}
At the beginning, we will show that some basic calculations, such as addition, subtraction, absolute value, maximum, minimum and average, can be construct from feedforward neural networks. More precisely,
\begin{eqnarray*}
\text{add}(x,y) &=& x+y, \\
\text{sub}(x,y) &=& x-y, \\
\text{abs}(x) &=& ReLU(x) + ReLU(-x),\\
\max(x,y) &=& (x+y+|x-y|)/2,\\
\min(x,y) &=& (x+y-|x-y|)/2,\\
\text{average}(x_{1},\cdots,x_{n}) &=& (x_{1}+\cdots+x_{n})/n,
\end{eqnarray*}
where $\text{ReLU}(x) = \max(x,0)$. Since neural network contains matrix summation and multiplication, we easily see the above functions can be written as the form of feedforward neural networks in figure $\ref{BasicCalculationofRealNumbers}$.

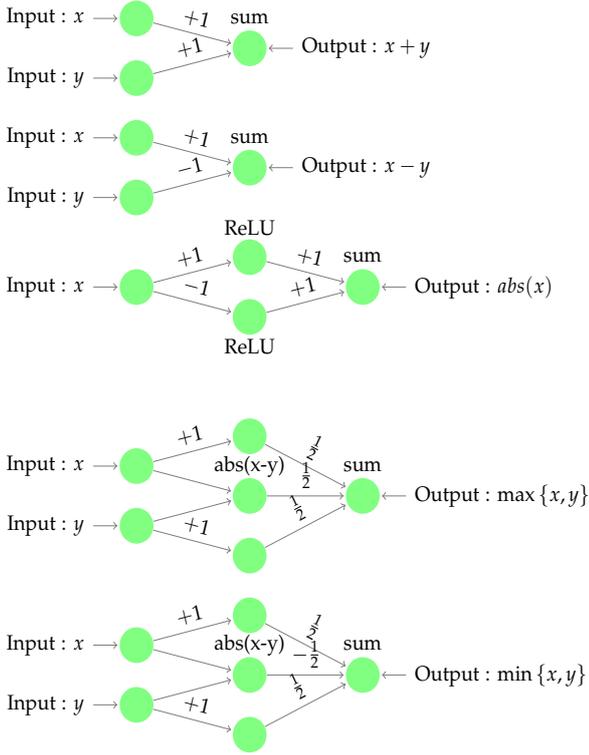
\begin{figure}
\begin{center}
\resizebox{8cm}{10cm}{
\begin{tikzpicture}[shorten >=1pt,->,draw=black!50, node distance=\layersep]
    \tikzstyle{every pin edge}=[<-,shorten <=1pt]
    \tikzstyle{neuron}=[circle,fill=black!25,minimum size=17pt,inner sep=0pt]
    \tikzstyle{input neuron}=[neuron, fill=green!50];
    \tikzstyle{output neuron}=[neuron, fill=red!50];
    \tikzstyle{hidden neuron1}=[neuron, fill=blue!50];
    \tikzstyle{hidden neuron2}=[neuron, fill=black!50];
    \tikzstyle{annot} = [text width=4em, text centered]

    \node[input neuron, pin=left:Input : $x$] (I-1) at (0,-1) {};
    \node[input neuron, pin=left:Input : $y$] (I-2) at (0,-2) {};
    \node[input neuron, pin=right:Output : $x+y$] (R-1) at (2,-1.5) {};
    \path (I-1) edge node[above,sloped]{$+1$} (R-1);
    \path (I-2) edge node[above,sloped]{$+1$} (R-1);
    \node at (2,-1)[ align=center]{sum};

    \node[input neuron, pin=left:Input : $x$] (I-3) at (0,-3) {};
    \node[input neuron, pin=left:Input : $y$] (I-4) at (0,-4) {};
    \node[input neuron, pin=right:Output : $x-y$] (R-2) at (2,-3.5) {};
    \path (I-3) edge node[above,sloped]{$+1$} (R-2);
    \path (I-4) edge node[above,sloped]{$-1$} (R-2);
    \node at (2,-3)[ align=center]{sum};

    \node[input neuron, pin=left:Input : $x$] (J-1) at (0,-5.5) {};
    \node[input neuron] (I-5) at (2,-5) {};
    \node[input neuron] (I-6) at (2,-6) {};
    \node[input neuron, pin=right:Output : $abs(x)$] (R-3) at (4,-5.5) {};
    \path (J-1) edge node[above,sloped]{$+1$} (I-5);
    \path (J-1) edge node[above,sloped]{$-1$} (I-6);
    \path (I-5) edge node[above,sloped]{$+1$} (R-3);
    \path (I-6) edge node[above,sloped]{$+1$} (R-3);
    \node at (4,-5)[ align=center]{sum};
    \node at (2,-4.5)[ align=center]{ReLU};
    \node at (2,-6.5)[ align=center]{ReLU};

    \node[input neuron, pin=left:Input : $x$] (J-2) at (0,-8.5) {};
    \node[input neuron, pin=left:Input : $y$] (J-3) at (0,-9.5) {};
    \node[input neuron] (I-7) at (2,-8) {};
    \node[input neuron] (I-8) at (2,-9) {};
    \node[input neuron] (I-9) at (2,-10) {};
    \node[input neuron, pin=right:Output : $\max{\{x,y\}}$] (R-4) at (4,-9) {};
    \path (J-2) edge node[above,sloped]{$+1$} (I-7);
    \path (J-2) edge node[above,sloped]{$$} (I-8);
    \path (J-3) edge node[above,sloped]{$+1$} (I-9);
    \path (J-3) edge node[above,sloped]{$$} (I-8);
    \path (I-7) edge node[above,sloped]{$\frac{1}{2}$} (R-4);
    \path (I-8) edge node[above,sloped]{$\frac{1}{2}$} (R-4);
    \path (I-9) edge node[above,sloped]{$\frac{1}{2}$} (R-4);
    \node at (4,-8.5)[ align=center]{sum};
    \node at (2,-8.5)[ align=center]{abs(x-y)};

    \node[input neuron, pin=left:Input : $x$] (J-4) at (0,-11.5) {};
    \node[input neuron, pin=left:Input : $y$] (J-5) at (0,-12.5) {};
    \node[input neuron] (I-10) at (2,-11) {};
    \node[input neuron] (I-11) at (2,-12) {};
    \node[input neuron] (I-12) at (2,-13) {};
    \node[input neuron, pin=right:Output : $\min{\{x,y\}}$] (R-5) at (4,-12) {};
    \path (J-4) edge node[above,sloped]{$+1$} (I-10);
    \path (J-4) edge node[above,sloped]{$$} (I-11);
    \path (J-5) edge node[above,sloped]{$+1$} (I-12);
    \path (J-5) edge node[above,sloped]{$$} (I-11);
    \path (I-10) edge node[above,sloped]{$\frac{1}{2}$} (R-5);
    \path (I-11) edge node[above,sloped]{$-\frac{1}{2}$} (R-5);
    \path (I-12) edge node[above,sloped]{$\frac{1}{2}$} (R-5);
    \node at (4,-11.5)[ align=center]{sum};
    \node at (2,-11.5)[ align=center]{abs(x-y)};

\end{tikzpicture}
}
\caption{Basic Calculation of Real Numbers}\label{BasicCalculationofRealNumbers}
\end{center}
\end{figure}

From above, we have already known that some basic calculation between two real numbers can be constructed by feedforward neural network. Next we will prove that these features in Table $\ref{FeatureEngineeringofTimeSeries}$ can also be constructed by feedforward neural networks. Suppose there is a time series $X_{n} = [x_{1},\cdots, x_{n}]$ with length $n$, some features are defined as followings:

\begin{eqnarray*}
& &\max : \\
& &\max_{1\leq i\leq n}\{x_{1},\cdots,x_{n}\} \\
& &=  \max\{x_{1},\max\{x_{2},\cdots,\max\{x_{n-1},x_{n}\}\}\}, \\
& &\min : \\
& &\min_{1\leq i\leq n}\{x_{1},\cdots,x_{n}\} \\
& &= \min\{x_{1},\min\{x_{2},\cdots,\min\{x_{n-1},x_{n}\}\}\}, \\
& &\mbox{average} :  \sum_{i=1}^{n}x_{i}/n, \\
& &\mbox{difference} : x_{2}-x_{1},\cdots,x_{n}-x_{n-1},\\
& &\mbox{integration} : \sum_{i=1}^{n}x_{i}, \\
& &\mbox{absolute sum of changes} : \sum_{i=1}^{n-1}|x_{i+1}-x_{i}|,\\
& &\mbox{mean change} : \sum_{i=1}^{n-1}(x_{i+1}-x_{i})/n = (x_{n}-x_{1})/n,\\
& &\mbox{mean second derivative central} : \\
& &\sum_{i=1}^{n-2}(x_{i+2}-2x_{i+1}+x_{i})/(2n).
\end{eqnarray*}

From basic calculation between real numbers, the above features can be also constructed by feedforward neural networks. Recall that Simple Moving Average, Weighted Moving Average, Exponentially Weighted Moving Average algorithms are defined as
\begin{eqnarray*}
SMA_{n}(w) 
&=& \frac{\sum_{k=1}^{w} x_{n-w+1}}{w} \\
&=& \frac{x_{n-w+1}+\cdots+x_{n}}{w},\\
WMA_{n}(w) &=& \frac{\sum_{k=1}^{w} k\cdot x_{n-w+k}}{\sum_{k=1}^{w}k} \\
&=& \frac{2\cdot\sum_{k=1}^{w} k\cdot x_{n-w+k}}{w(w+1)}, \\
EWMA_{j}(\alpha)  & =& x_{1}, \mbox{ if } j= 1,\\
EWMA_{j}(\alpha)  & =& \alpha\cdot x_{j-1}+(1-\alpha)\cdot EWMA_{j-1}, \mbox{ if } j\geq 2.
\end{eqnarray*}
where $w\geq 1$ is the window size and $\alpha\in[0,1]$ is a factor. The fitting features from these statistical models are constructed by the subtraction between fitting values from these models and real values. More precisely, the formulas of fitting features are
\begin{eqnarray*}
SMA_{n}(w) -x_{n}, \mbox{ where } w = 10, 20, 30, 40, 50,\\
WMA_{n}(w) - x_{n}, \mbox{ where } w = 10, 20, 30, 40, 50,\\
EWMA_{n}(\alpha) -x_{n}, \mbox{ where } \alpha = 0.2, 0.4, 0.6, 0.8.
\end{eqnarray*}
From its definition, EWMA can be written as
\begin{eqnarray*}
& &EWMA_{n}(\alpha) \\
&=& \alpha\cdot x_{n-1} + (1-\alpha)\cdot EWMA_{n-1} \\
&=& \alpha \cdot x_{n-1} + \alpha(1-\alpha) x_{n-2} + \alpha(1-\alpha) EWMA_{n-2} \\
&=& \alpha \cdot x_{n-1} + \alpha(1-\alpha) x_{n-2} +\cdots + \alpha^{n-2}(1-\alpha)x_{1}.
\end{eqnarray*}
In figure $\ref{FittingFeaturesOfTimeSeries}$, we have already shown that how to construct fitting features of time series from simple moving average, weighted moving average, exponentially moving average algorithms.

\begin{figure}
\begin{center}
\resizebox{8.3cm}{9.5cm}{
\begin{tikzpicture}[shorten >=1pt,->,draw=black!50, node distance=\layersep]
    \tikzstyle{every pin edge}=[<-,shorten <=1pt]
    \tikzstyle{neuron}=[circle,fill=black!25,minimum size=17pt,inner sep=0pt]
    \tikzstyle{input neuron}=[neuron, fill=green!50];
    \tikzstyle{output neuron}=[neuron, fill=red!50];
    \tikzstyle{hidden neuron1}=[neuron, fill=blue!50];
    \tikzstyle{hidden neuron2}=[neuron, fill=black!50];
    \tikzstyle{annot} = [text width=4em, text centered]

    \node[input neuron, pin=left:Input : $x_{n-w+1}$] (I-1) at (0,-1) {};
    \node[input neuron, pin=left:Input : $\cdots$] (I-2) at (0,-2) {};
    \node[input neuron, pin=left:Input : $x_{n}$] (I-3) at (0,-3) {};
    \node[input neuron ] (R-1) at (3,-1.5) {};
    \node[input neuron ] (R-2) at (3,-3) {};
    \path (I-1) edge node[above,sloped]{$\frac{1}{w}$} (R-1);
    \path (I-2) edge node[above,sloped]{$\frac{1}{w}$} (R-1);
    \path (I-3) edge node[above,sloped]{$\frac{1}{w}$} (R-1);
    \path (I-3) edge node[above,sloped]{$+1$} (R-2);
    \node at (3,-1)[ align=center]{$SMA_{n}(w)$};
    \node at (3,-3.5)[ align=center]{$x_{n}$};

    \node[input neuron, pin=right:Output : $SMA_{n}(w)-x_{n}$] (J-1) at (6,-2) {};
    \path (R-1) edge node[above,sloped]{$+1$} (J-1);
    \path (R-2) edge node[above,sloped]{$-1$} (J-1);

    ----
    \node[input neuron, pin=left:Input : $x_{n-w+1}$] (A-1) at (0,-5) {};
    \node[input neuron, pin=left:Input : $\cdots$] (A-2) at (0,-6) {};
    \node[input neuron, pin=left:Input : $x_{n}$] (A-3) at (0,-7) {};
    \node[input neuron ] (B-1) at (3,-5.5) {};
    \node[input neuron ] (B-2) at (3,-7) {};
    \path (A-1) edge node[above,sloped]{$\frac{2}{w(w+1)}$} (B-1);
    \path (A-2) edge node[above,sloped]{$$} (B-1);
    \path (A-3) edge node[above,sloped]{$\frac{2w}{w(w+1)}$} (B-1);
    \path (A-3) edge node[above,sloped]{$+1$} (B-2);
    \node at (4,-5)[ align=center]{$WMA_{n}(w)$};
    \node at (4,-7)[ align=center]{$x_{n}$};

    \node[input neuron, pin=right:Output : $WMA_{n}(w)-x_{n}$] (C-1) at (6,-6) {};
    \path (B-1) edge node[above,sloped]{$+1$} (C-1);
    \path (B-2) edge node[above,sloped]{$-1$} (C-1);

    ----
    \node[input neuron, pin=left:Input : $x_{1}$] (E-1) at (0,-9) {};
    \node[input neuron, pin=left:Input : $\cdots$] (E-2) at (0,-10) {};
    \node[input neuron, pin=left:Input : $x_{n-2}$] (E-3) at (0,-11) {};
    \node[input neuron, pin=left:Input : $x_{n-1}$] (E-4) at (0,-12) {};
    \node[input neuron, pin=left:Input : $x_{n}$] (E-5) at (0,-13) {};
    \node[input neuron ] (F-1) at (3,-10.5) {};
    \node[input neuron ] (F-2) at (3,-13) {};
    \path (E-1) edge node[above,sloped]{$\alpha^{n-2}(1-\alpha)$} (F-1);
    \path (E-2) edge node[above,sloped]{$$} (F-1);
    \path (E-3) edge node[above,sloped]{$(1-\alpha)\alpha$} (F-1);
    \path (E-4) edge node[above,sloped]{$\alpha$} (F-1);
    \path (E-5) edge node[above,sloped]{$+1$} (F-2);
    \node at (3.5,-9.5)[ align=center]{$EWMA_{n}(w)$};
    \node at (3.5,-12)[ align=center]{$x_{n}$};

    \node[input neuron, pin=right:Output : $EWMA_{n}(w)-x_{n}$] (G-1) at (6,-11) {};
    \path (F-1) edge node[above,sloped]{$+1$} (G-1);
    \path (F-2) edge node[above,sloped]{$-1$} (G-1);

\end{tikzpicture}
}
\caption{Fitting Features of Time Series}\label{FittingFeaturesOfTimeSeries}
\end{center}
\end{figure}
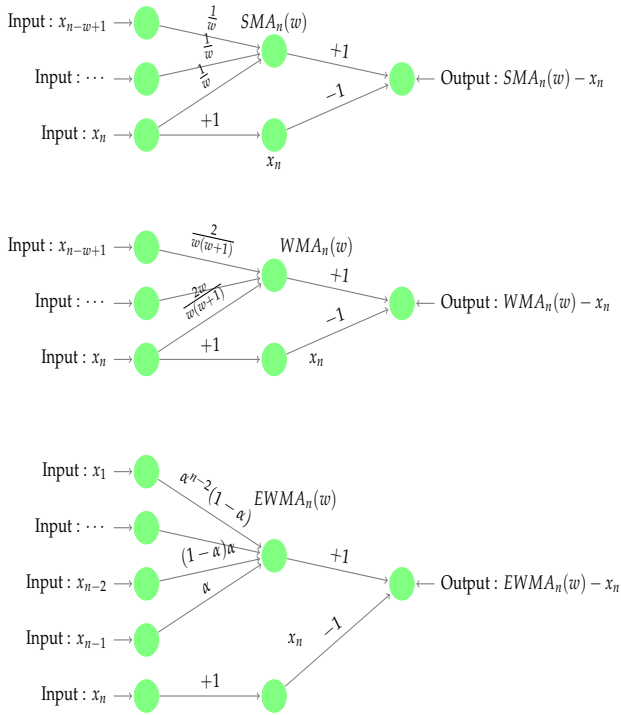

Next, we will prove that simple threshold, count above mean and count below mean can be written as the form of feedforward neural network. First, the features of simple threshold for a fixed real number $a$ are defined as, i.e.

\begin{equation*}
\mathcal{I}_{\{x\geq a\}} =\begin{cases}
&1, \mbox{ if } x\geq a,\\
&0, \mbox{ else } x<a.
\end{cases}
\end{equation*}
and
\begin{equation*}
\mathcal{I}_{\{x<a\}} =\begin{cases}
&0, \mbox{ if } x\geq a,\\
&1, \mbox{ else } x<a.
\end{cases}
\end{equation*}

Let
\begin{eqnarray*}
f_{a}(x) &=& \sigma(-2\cdot 10^{4}\cdot ReLU(-x+a) + 10),\\
g_{a}(x) &=& \sigma(-2\cdot 10^{4}\cdot ReLU(x-a) + 10),
\end{eqnarray*}
where $\sigma$ denotes the sigmoid function, i.e. $\sigma(x)=1/(1+\exp(-x))$. Then if $x>a$, then $f_{a}(x) = \sigma(10) \approx 1;$ if $x<a-10^{3}$, then $f_{a}(x) =\sigma(-2\cdot 10^{4}\cdot(a-x)+10)<\sigma(-10)\approx 0$. Therefore, $f_{a}(x) \approx \mathcal{I}_{\{x\geq a\}}$. Similarly, the proof of $g_{a}(x)\approx \mathcal{I}_{\{x<a\}}$ is the same as before.

Second, the construction of count above mean and count below mean are based on $f_{a}(x)$ and $g_{a}(x)$. More precisely, count above mean denotes the number of the time series $[x_{1},\cdots,x_{n}]$ which are larger than the average value of time series. Similarly, count below mean denotes the number of the time series $[x_{1},\cdots,x_{n}]$ which are less than the average value of time series. In figure $\ref{CountFeaturesOfTimeSeries}$, we have already shown that how to construct these two features from $f_{a}(x)$ and $g_{a}(x)$. Hence, we have already proven the main theorem of the paper.

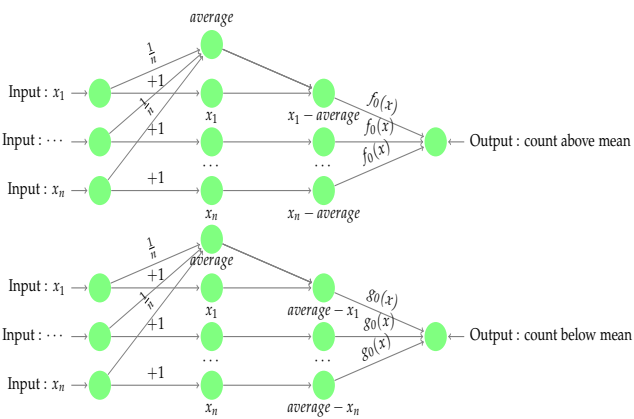
\begin{figure}
\begin{center}
\resizebox{8.5cm}{5.5cm}{
\begin{tikzpicture}[shorten >=1pt,->,draw=black!50, node distance=\layersep]
    \tikzstyle{every pin edge}=[<-,shorten <=1pt]
    \tikzstyle{neuron}=[circle,fill=black!25,minimum size=17pt,inner sep=0pt]
    \tikzstyle{input neuron}=[neuron, fill=green!50];
    \tikzstyle{output neuron}=[neuron, fill=red!50];
    \tikzstyle{hidden neuron1}=[neuron, fill=blue!50];
    \tikzstyle{hidden neuron2}=[neuron, fill=black!50];
    \tikzstyle{annot} = [text width=4em, text centered]

    \node[input neuron, pin=left:Input : $x_{1}$] (A-1) at (0,-1) {};
    \node[input neuron, pin=left:Input : $\cdots$] (A-2) at (0,-2) {};
    \node[input neuron, pin=left:Input : $x_{n}$] (A-3) at (0,-3) {};
    \node[input neuron ] (B-1) at (3,0) {};
    \node[input neuron ] (B-2) at (3,-1) {};
    \node[input neuron ] (B-3) at (3,-2) {};
    \node[input neuron ] (B-4) at (3,-3) {};
    \path (A-1) edge node[above,sloped]{$\frac{1}{n}$} (B-1);
    \path (A-2) edge node[above,sloped]{$$} (B-1);
    \path (A-3) edge node[above,sloped]{$\frac{1}{n}$} (B-1);
    \path (A-1) edge node[above,sloped]{$+1$} (B-2);
    \path (A-2) edge node[above,sloped]{$+1$} (B-3);
    \path (A-3) edge node[above,sloped]{$+1$} (B-4);
    \node at (3,0.5)[ align=center]{$average$};
    \node at (3,-1.5)[ align=center]{$x_{1}$};
    \node at (3,-2.5)[ align=center]{$\cdots$};
    \node at (3,-3.5)[ align=center]{$x_{n}$};

    \node[input neuron] (C-1) at (6,-1) {};
    \node[input neuron] (C-2) at (6,-2) {};
    \node[input neuron] (C-3) at (6,-3) {};
    \path (B-1) edge node[above,sloped]{$$} (C-1);
    \path (B-1) edge node[above,sloped]{$$} (C-1);
    \path (B-1) edge node[above,sloped]{$$} (C-1);
    \path (B-2) edge node[above,sloped]{$$} (C-1);
    \path (B-3) edge node[above,sloped]{$$} (C-2);
    \path (B-4) edge node[above,sloped]{$$} (C-3);

    \node at (6,-1.5)[ align=center]{$x_{1}-average$};
    \node at (6,-2.5)[ align=center]{$\cdots$};
    \node at (6,-3.5)[ align=center]{$x_{n}-average$};

    \node[input neuron, pin=right:Output : $\mbox{count above mean}$] (D-1) at (9,-2) {};
    \path (C-1) edge node[above,sloped]{$f_{0}(x)$} (D-1);
    \path (C-2) edge node[above,sloped]{$f_{0}(x)$} (D-1);
    \path (C-3) edge node[above,sloped]{$f_{0}(x)$} (D-1);

    \node[input neuron, pin=left:Input : $x_{1}$] (A-1) at (0,-5) {};
    \node[input neuron, pin=left:Input : $\cdots$] (A-2) at (0,-6) {};
    \node[input neuron, pin=left:Input : $x_{n}$] (A-3) at (0,-7) {};
    \node[input neuron ] (B-1) at (3,-4) {};
    \node[input neuron ] (B-2) at (3,-5) {};
    \node[input neuron ] (B-3) at (3,-6) {};
    \node[input neuron ] (B-4) at (3,-7) {};
    \path (A-1) edge node[above,sloped]{$\frac{1}{n}$} (B-1);
    \path (A-2) edge node[above,sloped]{$$} (B-1);
    \path (A-3) edge node[above,sloped]{$\frac{1}{n}$} (B-1);
    \path (A-1) edge node[above,sloped]{$+1$} (B-2);
    \path (A-2) edge node[above,sloped]{$+1$} (B-3);
    \path (A-3) edge node[above,sloped]{$+1$} (B-4);
    \node at (3,-4.5)[ align=center]{$average$};
    \node at (3,-5.5)[ align=center]{$x_{1}$};
    \node at (3,-6.5)[ align=center]{$\cdots$};
    \node at (3,-7.5)[ align=center]{$x_{n}$};

    \node[input neuron] (C-1) at (6,-5) {};
    \node[input neuron] (C-2) at (6,-6) {};
    \node[input neuron] (C-3) at (6,-7) {};
    \path (B-1) edge node[above,sloped]{$$} (C-1);
    \path (B-1) edge node[above,sloped]{$$} (C-1);
    \path (B-1) edge node[above,sloped]{$$} (C-1);
    \path (B-2) edge node[above,sloped]{$$} (C-1);
    \path (B-3) edge node[above,sloped]{$$} (C-2);
    \path (B-4) edge node[above,sloped]{$$} (C-3);

    \node at (6,-5.5)[ align=center]{$average-x_{1}$};
    \node at (6,-6.5)[ align=center]{$\cdots$};
    \node at (6,-7.5)[ align=center]{$average-x_{n}$};

    \node[input neuron, pin=right:Output : $\mbox{count below mean}$] (D-1) at (9,-6) {};
    \path (C-1) edge node[above,sloped]{$g_{0}(x)$} (D-1);
    \path (C-2) edge node[above,sloped]{$g_{0}(x)$} (D-1);
    \path (C-3) edge node[above,sloped]{$g_{0}(x)$} (D-1);

\end{tikzpicture}
}
\caption{Count Features of Time Series}\label{CountFeaturesOfTimeSeries}
\end{center}
\end{figure}
\end{proof}

\renewcommand{\refname}{Reference}

\bibliographystyle{ieeetr}

\bibliography{dnn-bibliography.bib}

\end{document}